\pgfplotsset{compat=1.17}
\newcommand{\bd}{{\boldsymbol{d}}}
\newcommand{\bx}{{\boldsymbol{x}}}
\newcommand{\bu}{{\boldsymbol{u}}}
\newcommand{\by}{{\boldsymbol{y}}}
\newcommand{\ixinactive}{\mathcal{I}^\bx_{\text{inactive}}}
\newcommand{\iuinactive}{\mathcal{I}^\bu_{\text{inactive}}}
\newcommand{\ixactive}{\mathcal{I}^\bx_{\text{active}}}
\newcommand{\iuactive}{\mathcal{I}^\bu_{\text{active}}}
\newcommand{\ours}{TransformerMPC\xspace}
\newtheorem{remark}{\textnormal{\textbf{Remark}}}
\newtheorem{problem}{\bf Problem}
\newtheorem{lemma}{\bf Lemma}
\newcommand{\mcl}[1]{\mathcal{#1}}
\DeclareMathOperator*{\minimize}{minimize}
\newcommand{\subjectto}{\mathop{\rm subject~to}}
\title{\LARGE \bf
TransformerMPC: Accelerating Model Predictive Control via Transformers
}
\author{Vrushabh Zinage$^{1}$, Ahmed Khalil$^{1}$,  Efstathios Bakolas $^{1}$
\thanks{$^{1}$Vrushabh Zinage, Ahmed Khalil, Efstathios Bakolas are with the Department of Aerospace Engineering and Engineering Mechanics, University of Texas at Austin
        {\tt\small vrushabh.zinage@utexas.edu,akhalil@utexas.edu, bakolas@austin.utexas.edu}}%
}
\begin{document}
\bibliographystyle{IEEEtran}

\maketitle
\thispagestyle{empty}
\pagestyle{empty}

\begin{abstract}

In this paper, we address the problem of reducing the computational burden of Model Predictive Control (MPC) for real-time robotic applications. We propose TransformerMPC, a method that enhances the computational efficiency of MPC algorithms by leveraging the attention mechanism in transformers for both online constraint removal and better warm start initialization. Specifically, TransformerMPC accelerates the computation of optimal control inputs by selecting only the active constraints to be included in the MPC problem, while simultaneously providing a warm start to the optimization process. This approach ensures that the original constraints are satisfied at optimality. TransformerMPC is designed to be seamlessly integrated with any MPC solver, irrespective of its implementation. To guarantee constraint satisfaction after removing inactive constraints, we perform an offline verification to ensure that the optimal control inputs generated by the MPC solver meet all constraints. The effectiveness of TransformerMPC is demonstrated through extensive numerical simulations on complex robotic systems, achieving up to $35\times$ improvement in runtime without any loss in performance. Videos and code are available at this website\footnote{Website: \url{https://transformer-mpc.github.io/}}\footnote{Code will be made available after final submission}.
\end{abstract}

\section{Introduction}
We consider the problem of improving the computational efficiency of Model Predictive Control (MPC) algorithms for general nonlinear systems under non-convex constraints. MPC is one of the most popular frameworks used in robotic systems for embedded optimal control with constraints, enabling them to operate autonomously in various real-world situations with applications ranging from legged and humanoid robots \cite{katayama2023model_legged_mpc1,hong2020real_legged_mpc2} to quadrotors \cite{didier2021robust_quad_mpc1,nguyen2024tiny_mpc}, swarms of spacecraft \cite{morgan2014model_swarms_mpc}, and ground robots \cite{nascimento2018nonholonomic_ground_mpc1,yu2021model_ground_mpc2}, to name but a few. However, they are generally computationally expensive as their implementation relies on solving constrained optimal control problems (OCPs). Well-established MPC solvers are usually computationally efficient but often restrict themselves to OCPs with convex quadratic cost functions subject to linear dynamics and constraints giving rise to (constrained) quadratic programs (QPs) \cite{stellato2020osqp_al1,bishop2024relu_qp,bambade2022prox_al2,ferreau2014qpoases_as1}. On the other hand, nonlinear MPC solvers \cite{fiedler2023dompc_nmpc3,Andersson2019_casadi_nmpc2,azhmyakov2008convex_nmpc_1} can handle general nonlinear dynamics with both convex and non-convex constraints but are generally computationally expensive.

Many approaches for reducing the computational burden of MPC have been proposed in the literature \cite{bishop2024relu_qp,howell2019altro}. Most of these methods, however, focus on the computational aspects of solving the underlying optimization problems. 
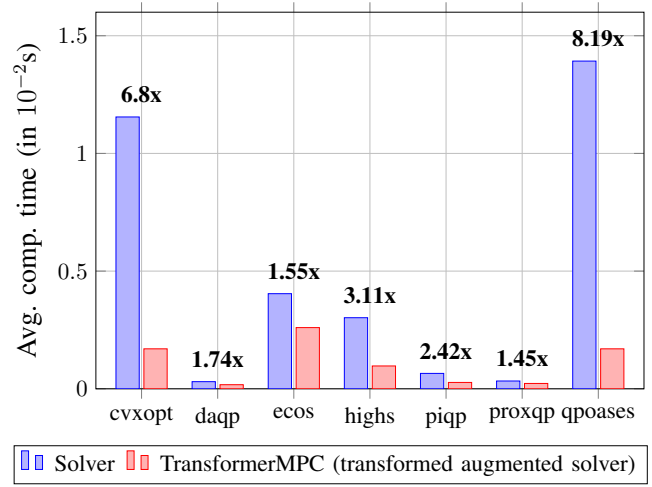
\begin{figure}[]
    \centering
\begin{tikzpicture}[scale=0.88]
\begin{axis}[
    ybar,
    ymin=0,
    ymax=1.6,
        legend style={
        at={(-0.15,-0.15)},
        anchor=north west,
        legend columns=-1,
        column sep=0.3em 
    },
    ylabel={\large Avg. comp. time (in $10^{-2}$s)},
    symbolic x coords={ cvxopt, daqp, ecos, highs, piqp, proxqp, qpoases},
    xtick=data,
         x=1.15cm, 
             grid=both, 
    ]
\addplot coordinates {(cvxopt,1.1549) (daqp,0.0303) (ecos,0.4041) 
                      (highs,0.3020)  (piqp,0.0654) (proxqp,0.0331) 
                      (qpoases,1.3923) };
\addplot coordinates {(cvxopt,0.1698) (daqp,0.0174) (ecos,0.26) 
                      (highs,0.097) (piqp,0.027) (proxqp,0.0227) 
                      (qpoases,0.1698) };
\node[above, yshift=-0.4cm, line width=1cm] at (axis cs:cvxopt,1.1549) {\textbf{\textcolor{black}{ 6.8x}}};
\node[above, yshift=-0.4cm, line width=1cm] at (axis cs:daqp,0.0303) {\textbf{\textcolor{black}{ 1.74x}}};
\node[above, yshift=-0.4cm, line width=1cm] at (axis cs:ecos,0.4041) {\textbf{\textcolor{black}{ 1.55x}}};

\node[above, yshift=-0.4cm, line width=1cm] at (axis cs:highs,0.3020) {\textbf{\textcolor{black}{ 3.11x}}};

\node[above, yshift=-0.4cm, line width=1cm] at (axis cs:piqp,0.0654) {\textbf{\textcolor{black}{ 2.42x}}};

\node[above, yshift=-0.4cm, line width=1cm] at (axis cs:proxqp,0.0331) {\textbf{\textcolor{black}{ 1.45x}}};
\node[above, yshift=-0.4cm, line width=1cm] at (axis cs:qpoases,1.3923) {\textbf{\textcolor{black}{ 8.19x}}};

\legend{Solver, TransformerMPC (transformed augmented solver)}
\end{axis}
\end{tikzpicture}



    \caption{\small Average computational time (in $10^{-2}$s) for various solvers on Upkie wheeled biped robot with and without \ours augmentation, demonstrating significant reductions in computational time of \ours in accelerating solver performance. }
    \centering
    \label{fig:random_mpc}
\end{figure}
\begin{figure*}[h!]
    \centering
    \begin{minipage}[t]{1\textwidth}
        \centering
\input{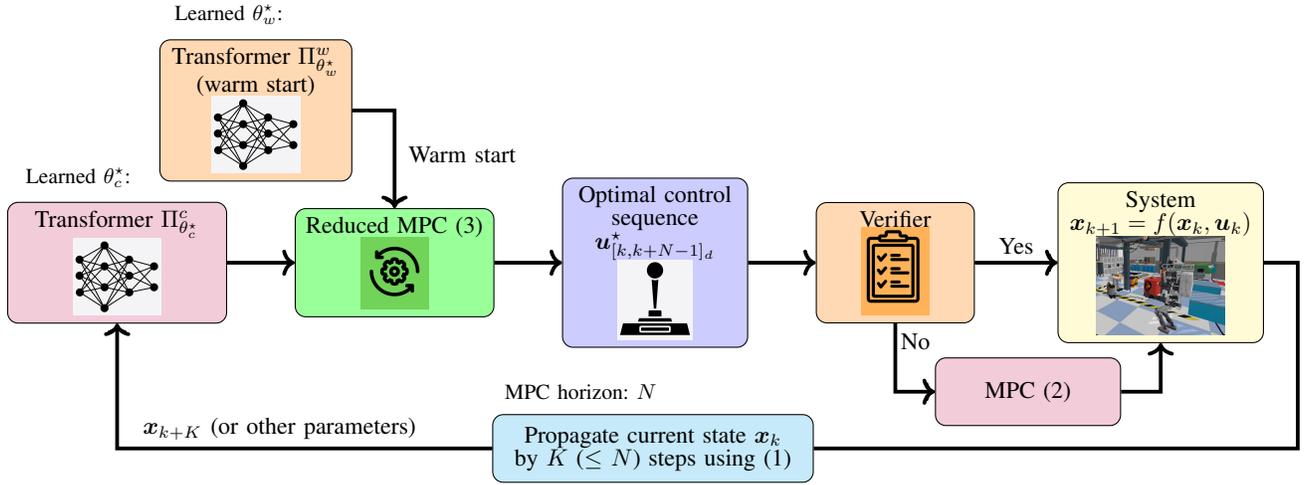}
        \subcaption{\small Our proposed approach (\ours)}
\label{fig:proposed_approach}
    \end{minipage}
    \caption{\small \ours improves the computational efficiency of the MPC framework by incorporating a transformer-based attention mechanism to determine active constraints as well as better warm start initialization, which can be integrated with any state-of-the-art MPC solver. Additionally, we have a verifier step that checks whether the optimal control sequence synthesized using \eqref{eqn:simplified_mpc_problem} satisfies all the constraints of the original MPC \eqref{eqn:mpc_typical}. Note that \ours is applicable to general nonlinear MPC problems as well.}
    \label{fig:combined_approach_training}
\end{figure*}
Most MPC solvers are generally classified into three categories: interior-point methods, augmented-Lagrangian / ADMM methods, and active-set methods. Interior-point solvers \cite{aps2019mosek_ip1,gurobi2023gurobi_ip2,frison2020hpipm_ip3,goulart2024clarabel_ip4,andersen2020cvxopt_ip5,domahidi2013ecos_ip6,pandala2019qpswift_ip7,schwan2023piqp_ip8,wachter2006implementation_ipopt} offer robust convergence, however, are usually challenging to warm start, making them less ideal for MPC problems. Augmented-Lagrangian and ADMM solvers \cite{stellato2020osqp_al1,bambade2022prox_al2,hermans2022qpalm_al3,howell2019altro, nguyen2024tiny_mpc} are prevalent in MPC solvers for robotic applications due to their fast convergence. However, both of these classes of methods are computationally efficient when restricted to quadratic programs (QPs) and usually do not scale well when applied to general nonlinear systems with non-convex constraints. By contrast, nonlinear MPC \cite{azhmyakov2008convex_nmpc_1,lautenschlager2015convexity_nmpc_2}, which accounts for nonlinear dynamics and constraints, typically results in non-convex optimization problems, making it more challenging to find an optimal solution. Active-set methods \cite{ferreau2014qpoases_as1,goldfarb1983numerically_quad_prog_as2,hall2023highs_as3,arnstrom2022dual_as4} which focus on constraint removal to improve the computational efficiency allow for easy warm starts and can be fast if the active set is correctly identified, though they suffer from combinatorial worst-case time complexity \cite{nocedal1999numerical} and are usually restricted to QPs. On the other hand, explicit MPC based methods pre-compute optimal control laws offline, enabling faster online implementation \cite{bemporad2002explicit_mpc_1}. However, the lookup tables required in explicit MPC can become exponentially large for systems with more than a few states and inputs \cite{alessio2009survey_exp_mpc_2}.

Recently, a new line of research \cite{shi2019neural_lander,o2022neural_fly,briden2024improving, salzmann2023real_time_mpc,guffanti2024transformers_rendezvous, klauvco2019machine_ml_warm_start,schwenkel2020online_ml_warm_start2, chen2022large} has brought together the best of both classical optimization-based methods and learning-based methods to rapidly and efficiently generate control inputs for OCPs. The transition towards incorporating learning-based methods into robotic systems is motivated by three main factors. First, the computational demands of deploying trained machine learning models during inference are minimal and likely align with the limited computational resources available on most robotic systems. Second, learning-based methods hold potential for addressing control problems that involve complex, multi-stage processes {with} potentially non-convex cost functions. Finally, learning-based methods can continuously refine the model of the system by incorporating new data over time, leading to improved control performance.

Inspired by these approaches, our approach uses the attention mechanism \cite{vaswani2017attention} from transformers to identify active constraints as well as compute better warm start initial conditions, enabling integration with existing MPC solvers \cite{aps2019mosek_ip1,gurobi2023gurobi_ip2,frison2020hpipm_ip3,goulart2024clarabel_ip4,andersen2020cvxopt_ip5,domahidi2013ecos_ip6,pandala2019qpswift_ip7,schwan2023piqp_ip8,wachter2006implementation_ipopt} or learning-based optimal controllers \cite{shi2019neural_lander,o2022neural_fly,briden2024improving,salzmann2023real_time_mpc,guffanti2024transformers_rendezvous}. Unlike most methods that are specific to QPs, our approach applies to general nonlinear MPC problems as well.
The contributions of this paper are as follows:
\begin{enumerate}
    \item We introduce \ours, a method that utilizes the attention mechanism in transformers for efficient online inactive constraint removal in MPC as well as for warm starting. This accelerates the computation of optimal control inputs by selecting a subset of constraints that are active at optimality in the MPC problem while ensuring that the solution maintains the original constraint satisfaction properties.
    \item Our approach is agnostic to the specific implementation of the optimization problem, allowing it to be integrated with any state-of-the-art MPC solver. Furthermore, for {QP-based MPC problems, we demonstrate how the transformer's active constraint predictions can be used to solve the MPC problem analytically.}
    \item To ensure constraint satisfaction after \ours removes inactive constraints, we perform offline verification to ensure that the optimal control sequence computed by the MPC solver post-removal satisfies all the constraints.
    \item We further enhance the computational efficiency for nonlinear MPC by combining multiple nonlinear / non-convex constraints into a single smooth constraint using log-sum-exp functions, accelerated via GPU parallelization of the summation operation.
\end{enumerate}

The paper is organized as follows. Section \ref{sec:prelim_and_problem} discusses the preliminaries and the problem statement followed by our proposed approach in Section \ref{sec:proposed_approach}. Finally, we discuss the results in Section \ref{sec:results} followed by some concluding remarks in Section \ref{sec:conclusion}.

\section{Preliminaries and Problem Formulation\label{sec:prelim_and_problem}}
Consider the discrete-time system given by
\begin{align}
    \bx_{k+1}=f(\bx_k,\bu_k),\quad\quad \bx_0=\bx^0,
    \label{eqn:discrete_nonlinear_system}
\end{align}
where $\bx_k\in\mathcal{X}\subset\mathbb{R}^n$ and $\bu_k\in\mathcal{U}\subset\mathbb{R}^m$ are the state and control input at time step $k$ respectively, $\bx^0$ is the initial state, $f:\mathcal{X}\times\mathcal{U}\rightarrow\mathcal{X}$ is continuously differentiable, and $\mathcal{X}$ and $\mathcal{U}$ are compact sets. The receding horizon MPC problem can be formulated as:
\begin{subequations}
    \begin{align}
    &\bu^\star_{[k,k+N-1]_d}=\\
    &\underset{\bu_k, \ldots, \bu_{k+N-1}}{\mathrm{argmin}}  \;\sum_{i=0}^{N-1} \ell(\bx_{k+i}, \bu_{k+i})\nonumber \\
   & \subjectto\;\bx_{k+i+1} = f(\bx_{k+i}, \bu_{k+i}), \\ 
    &\quad\quad\quad\quad\quad\bx_{k+i} \in \mathcal{X}, \quad \forall\; i = [0,N-1]_d,\\
    &\quad\quad\quad\quad\quad\bu_{k+i} \in \mathcal{U}, \quad \forall\; i = [0,N-1]_d,\\
    &\quad\quad\quad\quad\quad\bx_{k+N} \in \mathcal{X}_f,
\end{align}
\label{eqn:mpc_typical}
\end{subequations}
where $\bu^\star_{[k,k+N-1]_d}=[\bu_k^\star,\dots,\bu_{k+N-1}^\star]^\mathrm{T}$ is the optimal control sequence, $N$ is the time horizon, $\ell(\bx_k, \bu_k)$ is the stage cost function, $\mathcal{X}$ and $\mathcal{U}$ are the feasible sets for the state and control input, respectively, and $\mathcal{X}_f$ is the terminal constraint set. We assume that the sets $\mathcal{X}$ and $\mathcal{U}$ are characterized by sequences of smooth functions as $\mathcal{X}:=\{\bx \mid g_j(\bx)\leq 0,\forall~j\in[0,N_\mathcal{X}]_d\}$ and $\mathcal{U}:=\{\bu \mid h_j(\bu)\leq 0,\forall\;j\in[0,N_\mathcal{U}]_d\}$, respectively.{The notation $[i, j]_d$ $(i\geq j)$ represents the set of integers $\{i,i+1,\dots,j\}$.}
A constraint $g_j(\bx) \leq 0$ or $h_j(\bu) \leq 0$ is considered active at a given time step if the equality holds at optimality, i.e., $g_j(\bx^\star) = 0$ or $h_j(\bu^\star) = 0$. A constraint is active if it directly affects the current optimization solution, potentially limiting the feasible region of the state or control input. Conversely, a constraint is considered inactive if it is strictly satisfied, i.e., $g_j(\bx^\star) < 0$ or $h_j(\bu^\star) < 0$. Inactive constraints do not restrict the current solution and, therefore, have no immediate impact on the feasible region at that time step.
Consequently, the active constraint set $\mathcal{C}_{\text{active}}$ is defined as the set of constraints where the equality holds at a given time step, i.e., $\mathcal{C}_{\text{active}} = \left\{g_j(\bx^\star) = 0 \mid j \in [0, N_\mathcal{X}]_d\} \cup \{h_j(\bu^\star) = 0 \mid j \in [0, N_\mathcal{U}]_d\right\}$. These constraints restrict the feasible set of states or control inputs. The inactive constraint set $\mathcal{C}_{\text{inactive}}$ includes constraints where strict inequalities hold, i.e., $\mathcal{C}_{\text{inactive}} = \left\{g_j(\bx^\star) < 0 \mid j \in [0, N_\mathcal{X}]_d\} \cup \{h_j(\bu^\star) < 0 \mid j \in [0, N_\mathcal{U}]_d\right\}$, and do not impact the optimization at that time step. Furthermore, let the indices corresponding to inactivate state constraints, inactive input constraints, active state constraints, and active input constraints be given by $\ixinactive=\{j \mid g_j(\bx^\star) < 0\}$, $\iuinactive=\{j \mid h_j(\bu^\star) < 0\}$, $\ixactive=\{j \mid g_j(\bx^\star) = 0\}$ and $\iuactive=\{j \mid h_j(\bu^\star) = 0\}$, respectively.
At each time step $k$, the first control input $\bu_k^\star$ from the optimal sequence {$\bu^\star_{[k,k+N-1]_d}$} is applied to \eqref{eqn:discrete_nonlinear_system}, and the problem is solved again at the next time step. 

{

\subsection{Problem Statement\label{subsec:problem_statement}}
We now {formally state} the problem statement:
{\begin{problem}
    \normalfont Given the system dynamics \eqref{eqn:discrete_nonlinear_system}, and parameters of the MPC problem instance that uniquely characterize the optimal solution, the goal is to predict the set of active constraints at optimality.
    \label{prob:problem_statement}
\end{problem}
Note that predicting active and inactive constraints is generally non-trivial, especially for nonlinear MPC problems, as it requires one to know the optimal solution to \eqref{eqn:mpc_typical}. In addition, most active set methods \cite{ferreau2014qpoases_as1,goldfarb1983numerically_quad_prog_as2,hall2023highs_as3,arnstrom2022dual_as4} from the literature focus on constraint removal, but are mainly restricted to QPs with linear constraints and suffer from combinatorial worst-case time complexity \cite{nocedal1999numerical}. Note that problem \ref{prob:problem_statement} considers general MPC problems with nonlinear dynamics and non-convex/convex constraints.

\section{Proposed Approach \label{sec:proposed_approach}}
In this section, we first present the reduced MPC problem \eqref{eqn:simplified_mpc_problem} and show that the optimal control sequence synthesized by solving \eqref{eqn:simplified_mpc_problem} is equivalent to solving the original MPC problem \eqref{eqn:mpc_typical}. We next discuss our proposed approach \ours. 
The overall approach involves two main phases: a learning phase and an execution phase. In the learning phase, the transformer is trained on a dataset containing various MPC problem instances \eqref{eqn:mpc_typical}, where each instance is characterized by parameters such as initial conditions $ \bx_0 $, reference trajectories $ \{\bx^{\textbf{ref}}_k,\dots,\bx^{\textbf{ref}}_{k+N}\} $ etc. The transformer learns to map these {parameters} to the corresponding active constraints $ \mathcal{C}_\text{active} $ (Section \ref{subsec:learning_framework}). Another transformer model is used for better warm start initialization that predicts a control sequence close to the optimal control sequence (Section \ref{subsec:transformer_warm_start}). In the execution phase, these trained models are used for real-time prediction of the active constraints for a given set of parameters as well as for better warm start initialization (Section \ref{subsec:excetuation_phase}). {This allows the MPC solver to ignore all the inactive constraints and focus solely on the active constraints, thereby reducing the computational burden while ensuring that the control objectives are met.} 
Next, we consider the special case of linear MPC problems where the costs are quadratic, and the dynamics and constraints are linear, thereby reducing the problem to a QP. Given the transformer's constraint predictions, we show that the linear MPC problem can be solved analytically (Section \ref{subsec:qp}). Finally, we propose an approach to further accelerate nonlinear MPC problems by combining multiple non-convex constraints into a single smooth constraint using log-sum-exp functions, with GPU parallelization thereby enhancing computational efficiency (Section \ref{subsec:acc_nonlinear_mpc}).
\begin{figure}[ht]
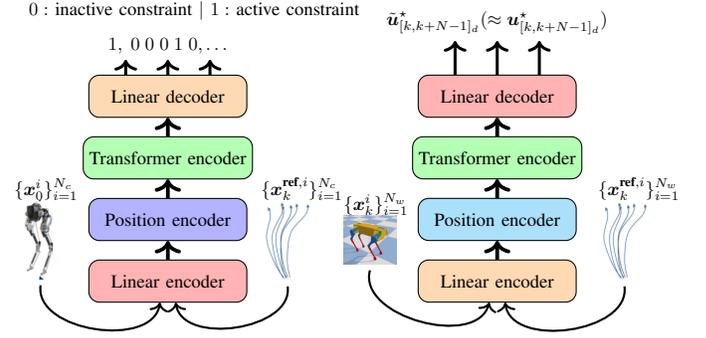

    \centering
    \begin{minipage}[t]{0.24\textwidth}
        \centering
        \input{tikz/training}
         \subcaption{\small $\Pi_{\theta_c}^c$ (Inactive constraint removal)}
        \label{fig:training_c}
    \end{minipage}%
    \hfill
        \begin{minipage}[t]{0.24\textwidth}
        \centering
        \input{tikz/training_warm_start}
         \subcaption{\small $\Pi_{\theta_w}^w$ (Warm start)}
        \label{fig:training_w}
    \end{minipage}%
    \caption{{
    For training both transformers, the input corresponds to problem parameters, such as the initial state $\bx_0$ and reference trajectories $\bx^{\textbf{ref}}$, that uniquely characterize the optimal solution. The transformer for inactive constraint removal, $\Pi^c_{\theta_c}$, returns the set of active constraints at optimality, while the transformer for warm starting, $\Pi^w_{\theta_w}$ returns a better initial guess for an optimal control sequence.}}
    \label{fig:training}
\end{figure}
\subsection{Constraint removal and simplified MPC problem\label{subsec:simplified_mpc_problem}}
We first reformulate the original MPC optimal control problem \eqref{eqn:mpc_typical} into a more reduced and simplified problem by eliminating constraints that are identified as inactive at the optimal solution. Note that an inactive constraint in  \eqref{eqn:mpc_typical} does not affect the optimal solution, meaning the optimal control sequence $\bu^\star_{[k,k+N-1]_d}$ from \eqref{eqn:mpc_typical} remains unchanged if the constraint is excluded from \eqref{eqn:mpc_typical}. 

\begin{lemma}
\normalfont  Let $\bx^0 \in \mathcal{X}$ be an {arbitrary} initial condition. {Assume that $\ixactive,\;\iuactive$ and $\ixinactive,\;\iuinactive$ represent the indices of the active and inactive constraints of \ref{eqn:mpc_typical}, respectively}. Then, the optimal control sequence $\bu^\star_{[k,k+N-1]_d}$ is a solution to 
\begin{subequations}
    \begin{align}
    &\underset{\bu_k, \ldots, \bu_{k+N-1}}{\minimize}  &&\sum_{i=0}^{N-1} \ell(\bx_{k+i}, \bu_{k+i}) \\
    &\subjectto && \bx_{k+i+1} = f(\bx_{k+i}, \bu_{k+i}), &i = [0,N-1]_d, \\
    &&& g_j(\bx)\leq 0, \quad\forall j\in \ixactive, \label{eqn:nonlin_cons_1}\\
    &&& h_j(\bu)\leq 0, \quad\forall j\in \iuactive,\label{eqn:nonlin_cons_2} 
\end{align}
\label{eqn:simplified_mpc_problem}
\end{subequations}
if and only if it is also a solution to problem \eqref{eqn:mpc_typical}.
\label{lemma:1}
\end{lemma}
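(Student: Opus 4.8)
The plan is to prove both implications of the biconditional by exploiting two facts: dropping the inactive constraints can only enlarge the feasible set, and by continuity the inactive constraints remain strictly satisfied in a neighborhood of $\bu^\star_{[k,k+N-1]_d}$, so that the two feasible sets coincide locally around this point. Write $\mathcal{F}$ for the feasible set of the original problem \eqref{eqn:mpc_typical} and $\mathcal{F}_{\mathrm{red}}$ for that of the reduced problem \eqref{eqn:simplified_mpc_problem}. Since \eqref{eqn:simplified_mpc_problem} retains only the constraints indexed by $\ixactive$ and $\iuactive$, every point feasible for \eqref{eqn:mpc_typical} is feasible for \eqref{eqn:simplified_mpc_problem}, so $\mathcal{F}\subseteq\mathcal{F}_{\mathrm{red}}$. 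The stage cost $\sum_{i=0}^{N-1}\ell(\bx_{k+i},\bu_{k+i})$ and the dynamics are identical in both problems, so the whole argument reduces to understanding how the removed constraints interact with optimality.

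For the direction carrying a solution of \eqref{eqn:mpc_typical} to a solution of \eqref{eqn:simplified_mpc_problem}, I would first note that $\bu^\star_{[k,k+N-1]_d}\in\mathcal{F}\subseteq\mathcal{F}_{\mathrm{red}}$, so it is feasible for the reduced problem. By definition of $\ixinactive$ and $\iuinactive$ the dropped constraints are strictly satisfied, $g_j(\bx^\star)<0$ and $h_j(\bu^\star)<0$; since the $g_j$ and $h_j$ are smooth and the states depend continuously on the controls through $f$, there is a neighborhood of $\bu^\star_{[k,k+N-1]_d}$ on which these strict inequalities persist. On that neighborhood the only binding restrictions are exactly those kept in \eqref{eqn:simplified_mpc_problem}, so $\mathcal{F}$ and $\mathcal{F}_{\mathrm{red}}$ agree locally, and optimality of $\bu^\star_{[k,k+N-1]_d}$ over $\mathcal{F}$ transfers to optimality over $\mathcal{F}_{\mathrm{red}}$.

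For the converse direction, from \eqref{eqn:simplified_mpc_problem} back to \eqref{eqn:mpc_typical}, I would use that $\bu^\star_{[k,k+N-1]_d}$ carries the inactive-constraint designations by construction, hence satisfies every dropped constraint and is feasible for the original problem; the same local coincidence of feasible sets then transfers optimality in the reverse direction. To make the first-order bookkeeping rigorous rather than purely geometric, I would write the KKT system of \eqref{eqn:mpc_typical} and invoke complementary slackness: the multipliers attached to the inactive constraints vanish, so those terms drop out of the Lagrangian stationarity condition, leaving precisely the KKT system of \eqref{eqn:simplified_mpc_problem}, and the two first-order systems are identical at $\bu^\star_{[k,k+N-1]_d}$.

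The main obstacle is this converse direction in the presence of the enlarged feasible set: because $\mathcal{F}_{\mathrm{red}}\supseteq\mathcal{F}$, a minimizer of the reduced problem could a priori escape $\mathcal{F}$ by violating a dropped constraint, and for the nonconvex problems considered here a clean global argument is unavailable. I expect to resolve this by restricting attention to the fixed candidate $\bu^\star_{[k,k+N-1]_d}$ together with the neighborhood on which the inactive constraints stay strict, so that the equivalence is read as the local statement that both problems are stationary at the same point with the same cost; verifying that such a neighborhood exists and that the complementary-slackness multipliers indeed vanish is the crux of the argument.
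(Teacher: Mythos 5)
Your proposal is correct in substance, and its rigorous core --- complementary slackness forces $\mu_j=0$ and $\lambda_j=0$ for $j\in\ixinactive$ and $j\in\iuinactive$, so the Lagrangian and hence the KKT system of \eqref{eqn:simplified_mpc_problem} coincide with those of \eqref{eqn:mpc_typical} at $\bu^\star_{[k,k+N-1]_d}$ --- is exactly the paper's entire proof. What you add is different in two respects. First, you supply the set-theoretic scaffolding the paper omits: the inclusion $\mathcal{F}\subseteq\mathcal{F}_{\mathrm{red}}$, which settles the ``reduced $\Rightarrow$ original'' direction globally without any local argument (a global minimizer over the larger set that lies in the smaller set is a global minimizer over the smaller set), and the continuity argument showing the strict inequalities $g_j(\bx^\star)<0$, $h_j(\bu^\star)<0$ persist on a neighborhood, so the two feasible sets agree locally. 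Second, and more importantly, you correctly identify that the ``original $\Rightarrow$ reduced'' direction cannot be established globally for nonconvex problems: enlarging the feasible set by deleting inactive constraints can introduce new, better global minimizers far from $\bu^\star_{[k,k+N-1]_d}$ (e.g., minimizing $-x^2$ over $-1\le x\le 2$ versus over $x\le 2$ after dropping the inactive lower bound). Your retreat to a local/stationarity reading is the honest resolution; the paper's proof has precisely the same limitation --- identical KKT systems only show that the same points are stationary, not that the global minimizers coincide --- but does not acknowledge it. It is worth noting that in the convex case (in particular the QP setting of Section \ref{subsec:qp}), local optimality implies global optimality and your local feasible-set coincidence upgrades the equivalence to a genuinely global one, so your argument actually buys a clean statement there that the paper's KKT-matching argument alone does not.
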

\begin{proof}
    \normalfont
The necessary conditions for optimality of problem \eqref{eqn:mpc_typical} can be expressed using its Lagrangian, which is given by
\begin{align}
\mathcal{L}\Big(\bu^\star_{[k,k+N-1]_d}\Big) &= \sum_{i=0}^{N-1} \ell(\bx_{k+i},\bu_{k+i})\nonumber\\
&+\sum_{j\in\ixactive\cup\ixinactive} \mu_j g_j(\bx)\nonumber\\
&+\sum_{j\in\iuactive\cup\iuinactive} \lambda_j h_j(\bu),
\label{eqn:langrangian_original}
\end{align}
where $\lambda_j\geq 0$ and $\mu_j\geq 0$. Given that the constraints indexed by $j \in\ixinactive $ for $g_j(\bx)$ and $j\in\iuinactive$ for $h_j(\bu)$ are assumed to be inactive, it follows that $\mu_j=0$ and $\lambda_j=0$  for $j \in \ixinactive$ and $j\in\iuinactive$ respectively. Therefore,
\begin{align}
\mathcal{L}\big(\bu^\star_{[k,k+N-1]_d}\big) & =\sum_{i=0}^{N-1} \ell(\bx_{k+i}, \bu_{k+i})\nonumber\\
&+\sum_{j\in\ixactive} \mu_j g_j(\bx)+\sum_{j\in\iuactive} \lambda_j h_j(\bu).
\label{eqn:langrangian_new}
\end{align}
Since \eqref{eqn:langrangian_original} and \eqref{eqn:langrangian_new} are identical, the KKT conditions based on $\mathcal{L}\left(\bu^\star_{[k,k+N-1]_d}\right)$ become the same for the reduced MPC problem \eqref{eqn:simplified_mpc_problem} and the actual MPC problem \eqref{eqn:mpc_typical}.
\end{proof}
\begin{remark}
    \normalfont The number of constraints that are reduced by solving \eqref{eqn:simplified_mpc_problem} instead of \eqref{eqn:mpc_typical} is $|\ixinactive\cup\iuinactive|$,{ where $|\mcl{S}|$ is the cardinality of the finite set $\mcl{S}$}. Note that as mentioned in Lemma \ref{lemma:1}, the optimal control sequence is not affected by solving the simplified MPC problem \eqref{eqn:simplified_mpc_problem}.
\end{remark}
\begin{remark}
    \normalfont {For a QP,} the computational complexity scales as $\mathcal{O}(M^3)$ with the number of constraints $M$ when solved using established interior point methods, assuming the number of variables is fixed \cite{nesterov1994interior}. Consequently, if the set of active constraints $\mathcal{C}_{\text{active}}$ is known a priori, the computational complexity of solving the QP reduces to $\mathcal{O}\left(|\ixinactive \cup \iuinactive|^3\right)$.
\end{remark}
}

\subsection{Learning framework for identification of active constraints\label{subsec:learning_framework}}
The dataset $\mathcal{D}$ corresponds to a specific instance of the MPC problem characterized by its parameters, i.e., $\mathcal{D}= \{\bx^i_0, \bx^{\textbf{ref},i}_k,\dots,\bx^{\textbf{ref},i}_{k+N}, \dots\}^{N_c}_{i=1} $ where the superscript $i$ denotes the $i^{\text{th}}$ parameter set. The corresponding labels for the output indicates the set of active constraints $\mathcal{C}^i_\text{active}$ with a value of $1$ and inactive constraints $ \mathcal{C}^i_\text{inactive}$ with a value of $0$. The transformer model is then trained on this dataset to learn a mapping from the input parameters to the set of active constraints. Mathematically, let $\mathcal{P}_c = \{\bx_0, \bx^{\textbf{ref}}_k,\dots,\bx^{\textbf{ref}}_{k+N}, \dots\}$ represent the input parameter for a given MPC problem instance.
The transformer model learns a function $\Pi^c_{\theta_c}$ such that $\Pi^c_{\theta_c}\left(\mathcal{P}_c\right) =\mathcal{C}_\text{active}$
where $\theta_c$ are parameters of the transformer model $\Pi^c_{\theta}$ \eqref{fig:training_c}. The transformer architecture \ref{fig:training_c} uses the attention mechanism to weigh the importance of different elements within $\mathcal{P}_c$ when predicting the active constraints. Specifically, the attention mechanism computes a set of attention weights $ \mathrm{A} \in \mathbb{R}^{n \times n} $ as follows:
\begin{align}
\mathrm{A} = \mathrm{softmax}\left(\frac{\mathrm{Q} \mathrm{K}^\mathrm{T}}{\sqrt{d_k}}\right),
\end{align}
where $ \mathrm{Q} \in \mathbb{R}^{n \times d_k} $ represents the query matrix, $ \mathrm{K} \in \mathbb{R}^{n \times d_k} $ the key matrix, and $ d_k $ is the dimensionality of the key vectors. The output of the attention mechanism, known as the context vector $ \mathrm{C} \in \mathbb{R}^{n \times d_v} $, is given by 
$\mathrm{C} = \mathrm{A} \mathrm{V}$
where $ \mathrm{V} \in \mathbb{R}^{n \times d_v} $ is the value matrix.
Each element of the context vector $\mathrm{C}$ is a weighted sum of the value vectors, with weights determined by the attention mechanism:
\begin{align}
\mathbf{c}_i = \sum_{j=1}^N \mathrm{softmax}\left(\frac{\langle\mathbf{q}_i, \mathbf{k}_j\rangle}{\sqrt{d_k}}\right) \cdot \mathbf{v}_j,
\end{align}
where $\langle\cdot,\cdot\rangle$ denotes the dot product, $\mathbf{q}_i$ is the $i^{\text{th}}$ row of $\mathrm{Q}$, $\mathbf{k}_j$ is the $j^{\text{th}}$ row of $\mathrm{K}$, $\mathbf{v}_j$ is the $j^{\text{th}}$ column of $\mathrm{V}$ and $\mathbf{c}_i$ is the $i^{\text{th}}$ row of $\mathrm{C}$. This process allows the model to weigh the importance of different elements within $\mathcal{P}_c$ when predicting the active constraints. The context vector $ \mathrm{C} $ captures the relevant information from the input parameters, $\mathcal{P}_c$, that is most relevant for predicting the active constraints. For training the transformer model, we use the Mean Squared Error (MSE) loss function that measures the difference between the predicted and actual active constraints.
Once the transformer model is trained, it can be used during online operation of MPC to predict the set of active constraints $ \mathcal{C}_\text{active} $ based on the current parameters, $\mathcal{P}_c$. This prediction allows the MPC solver to focus only on the active constraints, significantly reducing the computational complexity of the optimization problem. Constraints not predicted to be active, denoted as $ \mathcal{C}_\text{inactive}$, are removed from the problem formulation. Thus, the proposed approach streamlines the MPC process by predicting and removing inactive constraints in real-time, leading to faster computation of optimal control inputs without sacrificing constraint satisfaction.

{

\subsection{Transformer-Based Warm Start Initialization\label{subsec:transformer_warm_start}}
In addition to using transformers for inactive constraint removal, we leverage its capabilities to warm start the MPC problem as well. Towards this goal,
we use another transformer model $\Pi^w_{\theta_w}$ to predict an initial guess for the optimal control inputs $\tilde{\bu}_{[k,k+N-1]_d}$ $\left(\approx\bu^\star_{[k,k+N-1]_d}\right)$. Consequently, the state trajectories $ \{\bx_{k+1},\dots,\bx_{k+N}\} $ can also be computed, given the initial state $\bx_k$ at each time horizon $k$. By learning the optimal solutions of previous MPC problems, the transformer $\Pi^w_{\theta_w}$ can provide a starting point that is closer to the final solution.
Let $ \mathcal{Z}_{\mathrm{w}} = \{ \bx^i_k, \bx^{\textbf{ref},i}_k,\dots,\bx^{\textbf{ref},i}_{k+N}\}^{N_w}_{i=1} $ represent the set of current system states and reference trajectories. The transformer model $ \Pi^w_{\theta_w} $ \ref{fig:training_w} learns a mapping from $ \mathcal{Z}_{\mathrm{w}} $ to the warm start guess, i.e., $\Pi^w_{\theta_w}(\mathcal{Z}_{\mathrm{w}}) \rightarrow \tilde{\bu}_{[k,k+N-1]_d}$
where $ \tilde{\bu}_{[k,k+N-1]_d} $ is the approximate initial guess for the optimal control sequence (that is expected to be close to $\bu^\star_{[k,k+N-1]_d}$). This warm start solution is then used as the initial input for the MPC solver.
Note that the warm start provided by the transformer complements the online constraint removal strategy. After the constraints have been reduced learned model $\Pi^\star_{\theta_c}$, the MPC solver uses the transformer's warm start initialization to further speed up the convergence process. This dual approach ensures that the MPC problem is solved efficiently, both by reducing the number of inactive constraints as well as initiating the optimization process closer to the optimal solution via transformers.
}
\subsection{Execution phase\label{subsec:excetuation_phase}}
In the execution phase, given the set of parameters $ \mathcal{P}_c = \{\bx_0, \bx^{\textbf{ref}}_k,\dots,\bx^{\textbf{ref}}_{k+N}, \mathcal{C}\} $ that characterize an MPC problem, the trained transformer models, $\Pi^c_{\theta^\star_c}$ and $\Pi^w_{\theta^\star_w}$, are deployed in real-time to predict the active constraints, $\mathcal{C}_{\text{active}}$, and improve the warm start initialization, respectively. These active constraints are then added as constraints to the simplified MPC problem. The MPC problem is then solved iteratively via any state-of-the-art MPC solver as shown in Fig. \ref{fig:proposed_approach}. This reformulated problem is computationally less intensive than solving the complete MPC problem with all constraints. The optimal solution obtained from the simplified MPC problem is verified (see Fig. \ref{fig:proposed_approach}) to ensure it satisfies all the constraints of the original MPC. If any constraints are violated, the original MPC is solved to generate the optimal control inputs.

\subsection{Analytical solution to QP's after transformer based inactive constraint removal \label{subsec:qp}}
{In this section, we consider a special case of a general MPC problem with convex quadratic cost and linear dynamics, i.e., a QP.} Consider the following QP
\begin{subequations}
\begin{align}
\minimize\;\;\; &\frac{1}{2}\by^\mathrm{T}Q\by+\boldsymbol{p}^\mathrm{T}\by \\
\subjectto\;\;\; & A\by=\boldsymbol{b}, \quad C\by\leq \bd,
\end{align}
\label{eqn:all_constraints_qp}
\end{subequations}
where $\by$ is the augmented optimization variable obtained after converting the linear MPC into a QP \cite{bishop2024relu_qp}. If the active constraints are predicted accurately by the learned transformer model $\Pi^c_{\theta^\star_c}$ (where $\theta^\star_c$ are learned parameters), then the simplified QP with only equality constraints is given by
\begin{subequations}
\begin{align}
    \minimize\;\;\; & \frac{1}{2}\by^\mathrm{T}Q\by+\boldsymbol{p}^\mathrm{T}\by \\
    \subjectto \;\;& E\by=\boldsymbol{f},
\end{align}
\label{eqn:simplified_qp}
\end{subequations}
where $E=\mathrm{diag}(A,C_1)$, $\boldsymbol{f}=[\boldsymbol{b}^\mathrm{T},\;\bd^\mathrm{T}_1]^\mathrm{T}$, $C_1$ and $\bd_1$ are such that the inequality constraint $C\by\leq \bd$ is divided into active constraints, $C_1\by= \bd_1$, and inactive constraints, $C_2\by< \bd_2$. 
\begin{lemma}
 \normalfont Assuming that $Q$ is a positive definite (symmetric) matrix and $E$ a full row rank matrix, 
 the analytical solution to \eqref{eqn:simplified_qp} is given by
    \begin{align}
        \by^\star = Q^{-1} \left( E^\mathrm{T} \left( E Q^{-1} E^\mathrm{T} \right)^{-1} \left( E Q^{-1} \boldsymbol{p} + \boldsymbol{f} \right) - \boldsymbol{p} \right). \nonumber
    \end{align}
\end{lemma}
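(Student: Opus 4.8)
The plan is to solve \eqref{eqn:simplified_qp} by the method of Lagrange multipliers, exploiting the fact that it is an equality-constrained \emph{convex} quadratic program, so that its KKT (stationarity plus primal feasibility) conditions are simultaneously necessary and sufficient for global optimality. First I would attach a multiplier vector $\boldsymbol{\nu}$ to the constraint $E\by=\boldsymbol{f}$ and form the Lagrangian $\mathcal{L}(\by,\boldsymbol{\nu}) = \tfrac{1}{2}\by^\mathrm{T}Q\by + \boldsymbol{p}^\mathrm{T}\by + \boldsymbol{\nu}^\mathrm{T}(E\by - \boldsymbol{f})$. Stationarity in $\by$ gives $Q\by + \boldsymbol{p} + E^\mathrm{T}\boldsymbol{\nu} = \zeros$, and since $Q$ is positive definite (hence invertible) this solves uniquely for $\by = -Q^{-1}(\boldsymbol{p} + E^\mathrm{T}\boldsymbol{\nu})$.

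Next I would eliminate $\boldsymbol{\nu}$ by substituting this expression into the primal feasibility condition $E\by=\boldsymbol{f}$, which produces the linear system $(E Q^{-1} E^\mathrm{T})\boldsymbol{\nu} = -(E Q^{-1}\boldsymbol{p} + \boldsymbol{f})$. Solving for $\boldsymbol{\nu}$ and back-substituting into the stationarity expression then yields $\by^\star = -Q^{-1}\boldsymbol{p} + Q^{-1}E^\mathrm{T}(E Q^{-1} E^\mathrm{T})^{-1}(E Q^{-1}\boldsymbol{p} + \boldsymbol{f})$, which is exactly the stated closed form after factoring out $Q^{-1}$ on the left. This part is routine linear algebra once the inverse is known to exist.

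The one place where the hypotheses genuinely bite — and the step I would treat most carefully — is the invertibility of the Schur-type matrix $E Q^{-1} E^\mathrm{T}$, which is needed both to solve for $\boldsymbol{\nu}$ and for the final formula to even be well-defined. Here I would argue that $Q\succ 0$ implies $Q^{-1}\succ 0$, so for any $\bz$ one has $\bz^\mathrm{T}(E Q^{-1} E^\mathrm{T})\bz = (E^\mathrm{T}\bz)^\mathrm{T} Q^{-1}(E^\mathrm{T}\bz) \geq 0$, with equality only if $E^\mathrm{T}\bz = \zeros$; because $E$ has full row rank, $E^\mathrm{T}$ has trivial null space, forcing $\bz=\zeros$. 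Hence $E Q^{-1} E^\mathrm{T}$ is positive definite and invertible. This is the only nontrivial obstacle, and it is precisely where both assumptions are consumed.

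Finally I would close on optimality rather than mere stationarity: since the objective is strictly convex ($Q\succ 0$) and the feasible set is a nonempty affine subspace (nonempty because full row rank of $E$ makes $E\by=\boldsymbol{f}$ consistent), the problem admits a unique global minimizer, and the KKT point constructed above must be that minimizer. I expect no difficulty beyond the invertibility bookkeeping noted above.
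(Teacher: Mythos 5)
Your proposal is correct and follows essentially the same route as the paper: form the Lagrangian for the equality-constrained QP, impose stationarity and primal feasibility, eliminate the multiplier via the Schur-type system $EQ^{-1}E^{\mathrm{T}}\boldsymbol{\nu}=-(EQ^{-1}\boldsymbol{p}+\boldsymbol{f})$, and back-substitute. You are in fact slightly more careful than the paper, which omits the justification that $EQ^{-1}E^{\mathrm{T}}$ is invertible (your positive-definiteness argument is exactly the right one) and carries a sign slip in its intermediate multiplier equation that your version avoids.
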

\begin{proof}
The Lagrangian for \eqref{eqn:simplified_qp} is given by:
\begin{align}
\mathcal{L}(\by, \boldsymbol{\lambda}) = \frac{1}{2} \by^\mathrm{T} Q \by + \boldsymbol{p}^\mathrm{T} \by + \boldsymbol{\lambda}^\mathrm{T} (E \by - \boldsymbol{f}),
\end{align}
where $\boldsymbol{\lambda}\geq 0$ is the vector of Lagrange multipliers associated with the equality constraints. At optimality, the gradient of $\mathcal{L}$ with respect to $\by$ and $\boldsymbol{\lambda}$ is zero i.e., $\nabla_{\by} \mathcal{L} = Q \by + \boldsymbol{p} + E^\mathrm{T} \boldsymbol{\lambda} = 0$ and $\nabla_{\boldsymbol{\lambda}} \mathcal{L} = E \by - \boldsymbol{f} = 0$, which can be written in a compact form as 
\begin{align}
    \begin{pmatrix}
Q & E^\mathrm{T} \\
E & 0
\end{pmatrix}
\begin{pmatrix}
\by \\
\boldsymbol{\lambda}
\end{pmatrix}
=
\begin{pmatrix}
-\boldsymbol{p} \\
\boldsymbol{f}
\end{pmatrix}.
\end{align}
The solution to this system provides the optimal $\by^\star$ and $\boldsymbol{\lambda}^\star$. Specifically, $\by^\star$ is given by
$\by^\star = -Q^{-1} \boldsymbol{p} - Q^{-1} E^\mathrm{T} \boldsymbol{\lambda}^\star$
where $\boldsymbol{\lambda}^\star$ is computed by solving
$E Q^{-1} E^\mathrm{T} \boldsymbol{\lambda} = E Q^{-1} \boldsymbol{p} + \boldsymbol{f}$.
Finally, after substituting $\boldsymbol{\lambda}^\star$ in the expression for $\by^\star$, we get
\begin{align}
\by^\star = Q^{-1} \left( E^\mathrm{T} \left( E Q^{-1} E^\mathrm{T} \right)^{-1} \left( E Q^{-1} \boldsymbol{p} + \boldsymbol{f} \right) - \boldsymbol{p} \right)\nonumber
\end{align}
Consequently, the result follows.
\end{proof}

\begin{remark}
If the optimal solution $ \by^\star $ satisfies all the constraints in the original QP \eqref{eqn:all_constraints_qp}, it will be applied to the robotic system. Otherwise, one must solve the original QP to generate the optimal control inputs. For large-scale QP problems, the matrix inversion $\left( E Q^{-1} E^\mathrm{T} \right)^{-1}$ can be further accelerated through the use of GPUs \cite{benner2013matrix_qp_gpu1,ezzatti2011high_qp_gpu2}.
\end{remark}

\subsection{Accelerating Nonlinear MPC (NMPC) problems after inactive constraint removal\label{subsec:acc_nonlinear_mpc}}
After the inactive constraints have been removed from \eqref{eqn:mpc_typical} by the learned model $\Pi^c_{\theta^\star_c}$, NMPC problems can be further accelerated by combining multiple nonlinear and non-convex constraints, \eqref{eqn:nonlin_cons_1} and \eqref{eqn:nonlin_cons_2}, into a single smooth constraint function using log-sum-exp expressions (a smooth approximation of union operation), i.e., $g^{\text{comb}}(\bx,\bu):=\text{log}\left(\sum_{j\in\ixactive}\mathrm{e}^{\beta g_j(\bx)}+\sum_{j\in\iuactive}\mathrm{e}^{\beta h_j(\bu)}\right)\leq 0$ for some $\beta>0$. By leveraging GPU parallelization (for instance, using $\texttt{cupy}$ \cite{nishino2017cupy} package) for the summation within the log-sum-exp function, significant runtime improvements can be achieved, particularly beneficial for real-time MPC applications. It can be shown that the constraint set $\mathcal{S}=\left\{(\bx,\bu)|\;g_i(\bx)\leq 0,\;h_j(\bu)\leq 0,\;i\in\ixactive,\;j\in\iuactive\right\}$ is a subset of $\mathcal{S}^c=\{(\bx,\bu)|\;g^{\text{comb}}(\bx,\bu)\leq 0\}$ and as $\beta$ tends to infinity, the set $\mathcal{S}$ tends to $\mathcal{S}^c$. Furthermore, this approach scales effectively with increasing the number of constraints, offering a promising solution for large-scale, nonlinear MPC problems that require rapid computation. Note that other functions, such as Mellowmax or p-Norm functions, can also be used instead of log-sum-exp functions.
\section{Results\label{sec:results}}
In this section, we compare our proposed approach, \ours, with recent baseline methods for solving MPC problems. Through our numerical experiments, we aim to answer the following questions $\textit{(i)}$ what is the average reduction in the number of inactive constraints observed using our approach? $\textit{(ii)}$ what is the overall decrease in computational time of \ours compared with the baseline methods? $\textit{(iii)}$ how do other approaches, such as Multi-Layer Perceptron (MLP), random forest \cite{breiman2001random_rf}, gradient boosting \cite{friedman2001greedy_gb} and Support Vector Machine (SVM) \cite{cortes1995support_svm} compare with this our proposed transformer architecture? 
The average computational time (averaged over $500$ MPC problems) for our approach is computed by $\alpha t_{\mathrm{RMPC}}+(1-\alpha)(t_{\mathrm{RMPC}}+t_{\mathrm{MPC}})$ where $\alpha\in[0,1]$, $t_{\mathrm{RMPC}}$, and $t_{\mathrm{MPC}}$ are the times taken to compute optimal control inputs for reduced MPC (RMPC) \eqref{eqn:simplified_mpc_problem} and MPC \eqref{eqn:mpc_typical} respectively. For instance, for $100$ MPC problem instances, if the learned transformer predicts the active constraints for $90$ problems correctly, then $\alpha=0.9,\;(90/100)$.  
We compare our proposed approach \ours with interior point methods \cite{andersen2020cvxopt_ip5,domahidi2013ecos_ip6,schwan2023piqp_ip8}, 
 active set methods \cite{ferreau2014qpoases_as1,hall2023highs_as3}, proximal interior method \cite{schwan2023piqp_ip8} 
as well as an augmented Lagrangian method \cite{bambade2022prox_al2}. {For our experiments, we consider three realistic MPC problems that are common in the robotics community. The first is that of balancing an Upkie wheeled biped robot ($15$ states and $7$ control inputs) \cite{bambade2023proxqp, qpbenchmark2024}. Second, we consider the problem of stabilizing a Crazyflie quadrotor ($12$ states and $4$ control inputs) from a random initial condition to a hovering position \cite{nguyen2024tiny_mpc}. Finally, we consider the problem of stabilizing an Atlas humanoid robot ($58$ states and $29$ control inputs) balancing on one foot \cite{bishop2024relu_qp}.}
All benchmarking experiments were performed on a desktop equipped with an Intel(R) Core(TM) i9-10900K CPU @ 3.70GHz and an NVIDIA RTX A4000 GPU with 16 GB of GDDR6 memory.

\subsection{Average percentage reduction in the number of inactive constraints}
Figure \ref{fig:const_reduction} illustrates the percentage reduction in the total number of inactive constraints for the three robotic systems considered. \ours achieves significant inactive constraint reductions across all systems, with an 89.0\% reduction for the wheeled biped, 93.6\% for the quadrotor, and 95.8\% for the humanoid. These results demonstrate the effectiveness of \ours in simplifying the MPC problem by removing inactive constraints, particularly for high-dimensional and complex systems such as the wheeled biped and the Atlas humanoid robot, thereby enhancing computational efficiency.
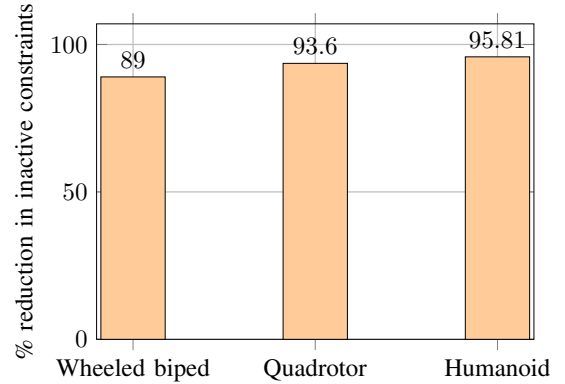
\begin{figure}[H]
    \centering
\begin{tikzpicture}[scale=0.95]
    \begin{axis}[
        ybar,
        grid=both,
        major grid style={line width=0.5pt,draw=gray!50},
        minor grid style={line width=0.2pt,draw=gray!20},
        symbolic x coords={Wheeled biped, Quadrotor, Humanoid},
        xtick=data,
        ylabel={\% reduction in inactive constraints},
        ymin=0,
        ymax=107,
        bar width=0.9cm,
        nodes near coords,
        nodes near coords align={vertical},
        x=2.55cm,
        width=16cm,
        height=6cm
    ]
     \addplot[
        ybar,
        fill=orange!40 
     ] coordinates {(Wheeled biped,89.0) (Quadrotor,93.6) (Humanoid,95.81) };
    
    \end{axis}
\end{tikzpicture}
    \caption{\small Average reduction in the total number of inactive constraints for the three robotic systems.}
    \label{fig:const_reduction}
\end{figure}
\subsection{Average reduction of computational time \label{sec:average_reduction_of_computational_time}}

Figures \ref{fig:random_mpc} and \ref{fig:quad_comp} illustrate the average computational time for various solvers when applied to three different systems: an Upkie wheeled biped robot, a Crazyflie quadrotor, and an Atlas humanoid robot. 
For the wheeled biped robot (see Fig. \ref{fig:random_mpc}), the \ours significantly reduces computational time across all solvers. Notably, for CVXOPT (in $10^{-2}$s), there is a reduction from $1.1549$ to $0.1698$ ($6.8\mathrm{x}$ improvement), and qpOASES (in $10^{-2}$s) from $1.3923$ to $0.1698$ ($8.19\mathrm{x}$ improvement). Even solvers with initially lower computational times, such as DAQP and ProxQP (in $10^{-2}$s), benefit from \ours, with reductions to $0.0174$ and $0.0227$, respectively. 

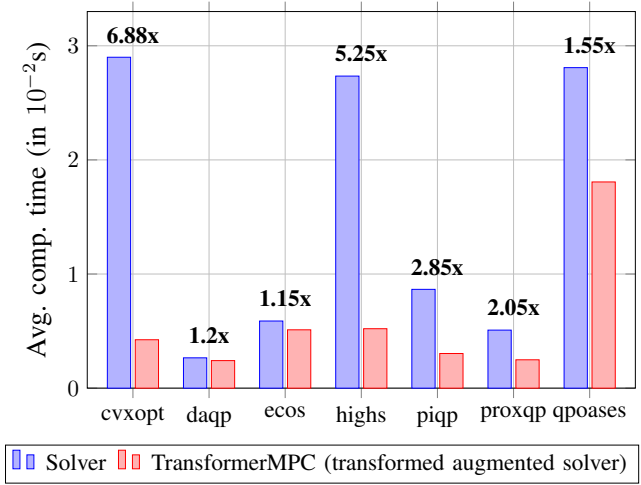
\begin{figure}[H]
    \centering
\begin{tikzpicture}[scale=0.88]
\begin{axis}[
    ybar,
    ymin=0,
    ymax=3.3,
        legend style={
        at={(-0.15,-0.15)},
        anchor=north west,
        legend columns=-1,
        column sep=0.3em 
    },
    ylabel={\large Avg. comp. time (in $10^{-2}$s)},
    symbolic x coords={ cvxopt, daqp, ecos, highs, piqp, proxqp, qpoases},
    xtick=data,
         x=1.15cm, 
             grid=both, 
    ]
\addplot coordinates {(cvxopt,2.8996) (daqp,0.2656) (ecos,0.5882) 
                      (highs,2.7345)  (piqp,0.8657) (proxqp,0.5085) 
                      (qpoases,2.80854) };
\addplot coordinates {(cvxopt,0.4241) (daqp,0.2413) (ecos,0.5111) 
                      (highs,0.5208) (piqp,0.3035) (proxqp,0.2484) 
                      (qpoases,1.80683) };
\node[above, yshift=-0.4cm, line width=1cm] at (axis cs:cvxopt,2.8996) {\textbf{\textcolor{black}{ 6.88x}}};
\node[above, yshift=-0.4cm, line width=1cm] at (axis cs:daqp,0.2656) {\textbf{\textcolor{black}{ 1.2x}}};
\node[above, yshift=-0.4cm, line width=1cm] at (axis cs:ecos,0.5882) {\textbf{\textcolor{black}{ 1.15x}}};

\node[above, yshift=-0.4cm, line width=1cm] at (axis cs:highs,2.7345) {\textbf{\textcolor{black}{ 5.25x}}};

\node[above, yshift=-0.4cm, line width=1cm] at (axis cs:piqp,0.8657) {\textbf{\textcolor{black}{ 2.85x}}};

\node[above, yshift=-0.4cm, line width=1cm] at (axis cs:proxqp,0.5085) {\textbf{\textcolor{black}{ 2.05x}}};
\node[above, yshift=-0.4cm, line width=1cm] at (axis cs:qpoases,2.80854) {\textbf{\textcolor{black}{ 1.55x}}};

\legend{Solver, TransformerMPC (transformed augmented solver)}
\end{axis}
\end{tikzpicture}



    \caption{\small Average computational time of proposed \ours compared with solvers for steering the quadrotor to a hover position from random initial conditions.}
    \label{fig:quad_comp}
\end{figure}
Furthermore, for the quadrotor (see Fig. \ref{fig:quad_comp}), \ours demonstrates consistent performance improvements across all solvers. CVXOPT's runtime (in $10^{-2}$s) decreases from $2.8996$ to $0.4241$ ($6.88\mathrm{x}$ improvement), while qpOASES (in $10^{-2}$s) shows a reduction from $2.80854$ to $1.80683$ ($1.55\mathrm{x}$ improvement). The most significant gains are observed in HiGHS (in $10^{-2}$s), where the runtime drops from $2.7345$ to $0.5208$. 
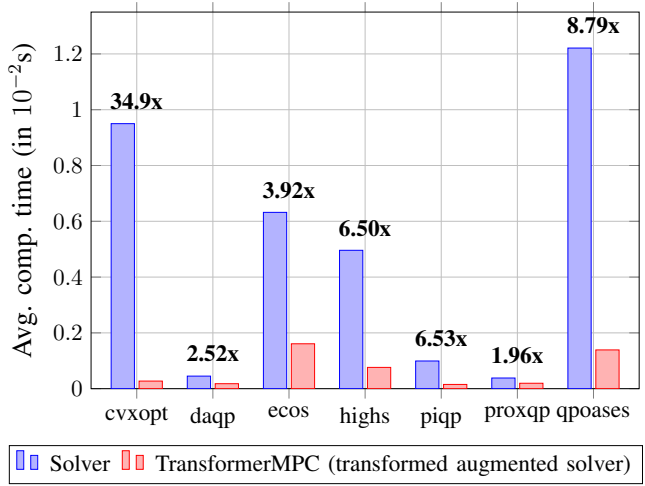
\begin{figure}[H]
    \centering
\begin{tikzpicture}[scale=0.88]
\begin{axis}[
    ybar,
    ymin=0,
    ymax=1.35,
        legend style={
        at={(-0.15,-0.15)},
        anchor=north west,
        legend columns=-1,
        column sep=0.3em 
    },
    ylabel={\large Avg. comp. time (in $10^{-2}$s)},
    symbolic x coords={ cvxopt, daqp, ecos, highs, piqp, proxqp, qpoases},
    xtick=data,
         x=1.15cm, 
             grid=both, 
    ]
\addplot coordinates {(cvxopt,0.95) (daqp,0.045) (ecos,0.632) 
                      (highs,0.496)  (piqp,0.0992) (proxqp,0.0382) 
                      (qpoases,1.221) };
\addplot coordinates {(cvxopt,0.0272) (daqp,0.0178) (ecos,0.1609) 
                      (highs,0.07625) (piqp,0.01518) (proxqp,0.01942) 
                      (qpoases,0.1389) };
\node[above, yshift=-0.4cm, line width=1cm] at (axis cs:cvxopt,0.95) {\textbf{\textcolor{black}{ 34.9x}}};
\node[above, yshift=-0.4cm, line width=1cm] at (axis cs:daqp,0.045) {\textbf{\textcolor{black}{ 2.52x}}};
\node[above, yshift=-0.4cm, line width=1cm] at (axis cs:ecos,0.632) {\textbf{\textcolor{black}{ 3.92x}}};

\node[above, yshift=-0.4cm, line width=1cm] at (axis cs:highs,0.496) {\textbf{\textcolor{black}{ 6.50x}}};

\node[above, yshift=-0.4cm, line width=1cm] at (axis cs:piqp,0.0992) {\textbf{\textcolor{black}{ 6.53x}}};

\node[above, yshift=-0.4cm, line width=1cm] at (axis cs:proxqp,0.0382) {\textbf{\textcolor{black}{ 1.96x}}};
\node[above, yshift=-0.4cm, line width=1cm] at (axis cs:qpoases,1.221) {\textbf{\textcolor{black}{ 8.79x}}};

\legend{Solver, TransformerMPC (transformed augmented solver)}
\end{axis}
\end{tikzpicture}



    \caption{\small Average computational time of proposed \ours on balancing of Atlas humanoid robot on one foot compared with recent baseline methods}
    \label{fig:humanoid_comp}
\end{figure}
Finally, for the problem of Atlas balancing on one foot (see Fig. \ref{fig:humanoid_comp}), CVXOPT runtime (in $10^{-2}$s) has a reduction from $0.95$ to $0.0272$ ($34.9\mathrm{x}$ improvement), while qpOASES (in $10^{-2}$s) sees an $8.79\mathrm{x}$ reduction from $1.221$ to $0.1389$. Other solvers like PIQP and HiGHS also demonstrate substantial improvements of $6.53\mathrm{x}$ and $6.50\mathrm{x}$, respectively. Even for lower-time solvers like DAQP and ProxQP, \ours achieves notable gains, reducing the time by $2.52\mathrm{x}$ and $1.96\mathrm{x}$, respectively.
This reduction indicates that \ours effectively enhances solver performance by warm-starting and removing inactive constraints, making it real-time implementable for robotic systems such as wheeled biped robots, quadrotors, and humanoid robots where computational efficiency is critical.
\subsection{\small Comparison with other learned models for active constraint prediction\label{subsec:comparison_with_nn_architecutres}}
For these numerical experiments, we employed an $80-20$ train-test split of the dataset, using $80\%$ of the data for training the models and $20\%$ for testing. Fig. \ref{fig:other_nn} presents the prediction accuracy on test data for different learning models in predicting inactive constraints across the three robotic systems. Among the models, the learned transformer model $\Pi^c_{\theta^\star_c}$ consistently achieves the highest accuracy, with $89\%$ for wheeled biped robot, $92\%$ for quadrotor, and $95.8\%$ for humanoid. In contrast, other learned models, including MLP, Random Forest \cite{breiman2001random_rf}, Gradient Boosting \cite{friedman2001greedy_gb}, Support Vector Machine (SVM) \cite{cortes1995support_svm}, and Logistic Regression, show significantly lower accuracy across all systems. These results highlight the efficacy of the learned transformer model in accurately predicting inactive constraints.
\begin{figure}[ht]
    \centering
\begin{tikzpicture}[scale=0.78]
\begin{axis}[
    ybar,
     enlargelimits=0.19,
      legend style={
        at={(0.02,-0.25)},
        anchor=north west,
        legend columns=3,
        column sep=1em 
    },
    ylabel={\large \% Accuracy on test data},
    symbolic x coords={wheeled biped,quadrotor, humanoid},
    xtick=data,
            ymin=13.93,
        ymax=90,
        bar width=0.36cm,
    nodes near coords,
    nodes near coords align={vertical},
        x=3.5cm,
grid=both,
            width=5.5cm,
         height=7cm
    ]
\addplot coordinates {(wheeled biped,89) (quadrotor,92.0) (humanoid,95.8)};
\addplot coordinates {(wheeled biped,3.3) (quadrotor,0.5) (humanoid,12.5)};
\addplot coordinates {(wheeled biped,3.3) (quadrotor,2) (humanoid,4.5)};
\addplot coordinates {(wheeled biped,8.3) (quadrotor,18) (humanoid,12.5)};
\addplot coordinates {(wheeled biped,15.8) (quadrotor,35.5) (humanoid,41.6)};
\addplot coordinates {(wheeled biped,10) (quadrotor,43.5) (humanoid,42.9)};

\legend{Transformer (Ours),MLP,Random forest, Gradient boosting, SVM, Logistic regression}
\end{axis}
\end{tikzpicture}
    \caption{\small Percentage accuracy for inactive constraint removal via different learned models on test data.
    }
    \label{fig:other_nn}
\end{figure}
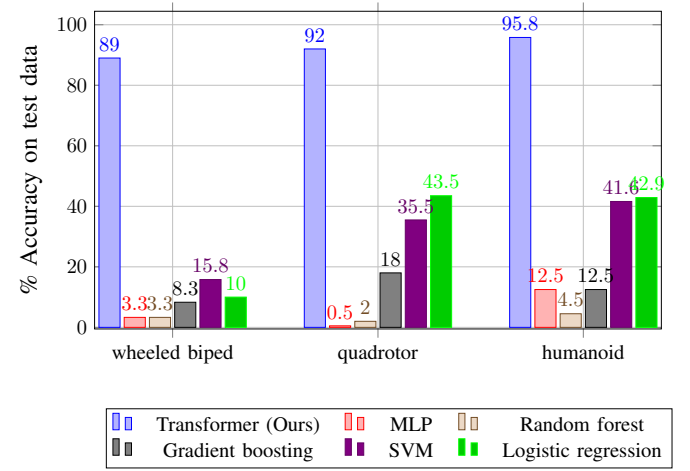

\section{Conclusions\label{sec:conclusion}}
In this paper, we addressed the problem of improving the computational efficiency in Model Predictive Control (MPC) by introducing a transformer-based approach for online inactive constraint removal as well as warm start initialization. Our approach reduced the computational burden by focusing on a subset of constraints predicted by the learned transformer model. We synthesized optimal control inputs by solving the MPC problem with this reduced set of constraints and verified the efficacy of our approach with complex robotic systems, demonstrating significant improvements in performance and feasibility for real-time applications over other state-of-the-art MPC solvers. Future work includes the implementation of TransformerMPC on resource constrained robotic systems as well as extending our approach to multi-agent systems.

\bibliography{main.bib}





\end{document}